\newtheorem{thm}{Theorem}[section]
\newtheorem{lm}[thm]{Lemma}
\newtheorem{df}[thm]{Definition}
\newtheorem{prop}[thm]{Proposition}
\newtheorem{ass}[thm]{Assumption}
\begin{document}
	
	\title{\bf Shaping Advice in Deep Multi-Agent Reinforcement Learning}
	
	\author{Baicen Xiao$^{1}$, Bhaskar Ramasubramanian$^{1}$, and Radha Poovendran$^{1}$%
		\thanks{$^{1}$Network Security Lab, Department of Electrical and Computer Engineering, 
			University of Washington, Seattle, WA 98195, USA. \newline
			{\tt\small \{bcxiao, bhaskarr, rp3\}@uw.edu}}
	}
	
\maketitle
	
\begin{abstract}
Multi-agent reinforcement learning involves multiple agents interacting with each other and a shared environment to complete tasks. 
When rewards provided by the environment are sparse, agents may not receive immediate feedback on the quality of actions that they take, thereby affecting learning of policies. 
In this paper, we propose a method called \emph{Shaping Advice in deep Multi-agent reinforcement learning (SAM)} to augment the reward signal from the environment with an additional reward termed \emph{shaping advice}. 
The shaping advice is given by a difference of potential functions at consecutive time-steps. Each potential function is a function of observations and actions of the agents. 
The shaping advice needs to be specified only once at the start of training, and can be easily provided by non-experts. 
We show through theoretical analyses and experimental validation that the shaping advice provided by SAM does not distract agents from completing tasks specified by the environment reward. 
Theoretically, we prove that the convergence of policy gradients and value functions when using SAM implies the convergence of these quantities in the same environment in the absence of SAM. 
Experimentally, we evaluate SAM on three tasks in the \emph{multi-agent Particle World} environment that have sparse rewards. We observe that using SAM results in agents learning policies to complete tasks faster, and obtain higher rewards than: i) using sparse rewards alone; ii) a state-of-the-art reward redistribution method. 
\end{abstract}
	
\section{Introduction}
 
Multi-agent reinforcement learning (MARL) involves multiple autonomous agents, all of whom share a common environment \cite{busoniu2008comprehensive}. 
Each agent learns to complete tasks by maximizing a cumulative reward, where the reward signal is provided by the environment. 
Examples of multi-agent systems where MARL has been applied include autonomous vehicle coordination \cite{sallab2017deep}, multi-player video games \cite{tampuu2017multiagent}, and analysis of social dilemmas \cite{leibo2017multi}. 

In these settings, any single agent interacts not only with the environment, but also with other agents. 
As the behaviors of agents evolve in the environment, the environment will become non-stationary from the perspective of any single agent. 
Thus, agents that independently learn behaviors by assuming other agents to be part of the environment can result in unstable learning regimes \cite{foerster2017stabilising, matignon2012independent, tan1993multi}. 

When trained agents are deployed independently, or when communication among agents is costly, the agents will need to be able to learn decentralized policies. 
Decentralized policies can be efficicently learned by adopting the \emph{centralized training with decentralized execution (CTDE)} paradigm, first introduced in \cite{lowe2017multi}. 
An agent using CTDE can make use of information about other agents' observations and actions to aid its own learning during training, but will have to take decisions independently at test-time. 
However, the ability of an agent to learn decentralized policies can be affected if reward signals from the environment are sparse. 

The availability of immediate feedback on the quality of actions taken by the agents at each time-step is critical to the learning of behaviors to successfully complete a task. 
This is termed \emph{credit assignment} \cite{sutton2018reinforcement}. 
Sparse rewards make it difficult to perform effective credit assignment at intermediate time-steps of the learning process. 
One approach that has been shown to improve the learning of policies when rewards are sparse is \emph{reward shaping} \cite{agogino2008analyzing, devlin2011empirical, devlin2014potential}. 
Reward shaping techniques augment the reward provided by the environment with an additional \emph{shaping reward}. 
The shaping reward can be designed to be \emph{dense} (i.e., not sparse), and agents learn policies using the augmented reward.

There needs to be a systematic approach to provide the shaping rewards, 
since the additional reward can distract an agent from completing the task specified by the reward provided by the environment \cite{randlov1998learning}. 
In this paper, we refer to the additional reward given to agents at each time-step as \emph{\textbf{shaping advice}}. 
The shaping advice is specified by a difference of potential functions at consecutive time-steps, where each potential function depends on observations and actions of the agents. 
Potential-based methods 
ensure that agents will not be distracted from completing a task specified by the reward given by the environment since 
the total potential-based reward obtained when starting from a state and returning to the same state at a future time is zero \cite{ng1999policy}.

In this paper, we introduce \emph{Shaping Advice in deep Multi-agent reinforcement learning (SAM)}, a method that 
incorporates information about the task and environment to define \emph{shaping advice}. 
The advice in SAM can be interpreted as \emph{domain knowledge} that aids credit assignment \cite{mannion2018reward}. 
This advice only needs to be specified once at the start of the training process. 
We demonstrate that SAM does not distract agents from completing tasks specified by the reward from the environment. 
We make the following contributions in this paper: 
\begin{itemize}
\item We introduce SAM to incorporate potential-based shaping advice in multi-agent environments with continuous states and actions. SAM uses the CTDE paradigm to enable agents to efficiently learn decentralized policies. 
\item We theoretically demonstrate that SAM does not distract agents by proving that convergence of policy gradients and values when using SAM implies convergence of these quantities in the absence of SAM. 
\item We verify our theoretical results by evaluating SAM on three tasks in the multi-agent Particle World environment \cite{lowe2017multi}. These tasks include cooperative and competitive objectives, and have sparse rewards. We show that using SAM allows agents to learn policies to complete the tasks faster, and obtain higher rewards than: i) using sparse rewards alone, and ii) a state-of-the-art reward redistribution technique. 
\end{itemize}
This paper extends techniques introduced in our previous work \cite{xiao2019potential} for single-agent RL to the multi-agent setting. 

The remainder of this paper is organized as follows: Section \ref{RelatedWork} presents related work and Section \ref{Background} provides an introduction to stochastic games and policy gradients. We provide details on SAM and present our theoretical analysis on its convergence in Section \ref{Methods}. Experiments validating the use of SAM are reported in Section \ref{Experiments}, and Section \ref{Conclusion} concludes the paper.

\section{Related Work}\label{RelatedWork}

Decentralized and distributed control techniques for multi-agent systems is a popular area of research. 
A widely studied problem in such systems is the development of algorithms to specify methods by which information can be exchanged among agents so that they can jointly complete tasks.
A technique to ensure fixed-time consensus for multi-agent systems whose interactions were specified by a directed graph was studied in \cite{zuo2015nonsingular, tian2018fixed}. 
The authors of \cite{li2020consensus} proposed an adaptive distributed event-triggering protocol to guarantee consensus for multi-agent systems specified by linear dynamics and interactions specified by an undirected graph. 
We direct the reader to \cite{qin2016recent} for a survey of recent developments in consensus of multi-agent systems. 
These works, however, assumed the availability of models of individual agent's dynamics, and of the interactions between agents. 

In the absence of a model of the environment, techniques have been proposed to train agents to complete tasks in cooperative MARL tasks where all agents share the same global reward.  
The authors of \cite{sunehag2018value} introduced value decomposition networks that decomposed a centralized value into a  
sum of individual agent values to assess contributions of individual agents to a shared global reward. 
An additional assumption on monotonicity of the centralized value function 
was imposed in QMIX  \cite{rashid2018qmix} to assign credit to an agent. 
The action spaces of agents in the above-mentioned works were discrete and finite, and these techniques cannot be easily adapted to settings with continuous action spaces. 
In comparison, we study reward shaping in cooperative and competitive MARL tasks in environments with continuous action spaces.
 
In single agent RL, feedback signals provided by a human operator have been used to improve credit assignment. 
Demonstrations provided by a human operator were used to synthesize a ‘baseline policy’ that was used to guide learning in \cite{taylor2011integrating, wang2017improving}. 
When expert demonstrations were available, imitation learning was used to guide exploration of the RL agent in \cite{kelly2019hg, ross2011reduction}. 
Feedback provided by a human operator was converted to a shaping reward to aid training a deep RL agent in environments with delayed rewards in \cite{xiao2020fresh}. 

An alternative approach is potential-based reward shaping. 
Although this requires prior knowledge of the problem domain, potential-based techniques have been shown to offer 
guarantees on optimality and convergence of policies in both single \cite{ng1999policy} and multi-agent \cite{devlin2011empirical, devlin2011theoretical, lu2011policy} cases. 
These works had focused on the use of potential-based methods in environments with discrete action spaces. 
In our previous work \cite{xiao2019potential}, we had developed potential-based techniques to learn stochastic policies in single-agent RL with continuous states and actions. 
We adapt and extend methods from \cite{xiao2019potential} to develop potential-based techniques for MARL in this paper. 

In \cite{gangwani2020learning}, the authors presented a method called iterative relative credit refinement (IRCR). 
This method used a `surrogate objective' to uniformly redistribute a sparse reward along the length of a trajectory in single and multi-agent RL. 
We empirically compare SAM with IRCR, and explain why SAM is able to guide agents to learn policies that result in higher average rewards than IRCR. 

\section{Background}\label{Background}

This section establishes the notation that will be used in the paper, and provides a brief introduction to stochastic games and multi-agent policy gradients. 

\subsection{Stochastic Games}

A \emph{stochastic game} with $n$ players is a tuple $\mathcal{G} = (X, A^1, \dots, A^n, \mathbb{T}, R^1, \dots, R^n, O^1,\dots,O^n,\rho_0, \gamma)$. $X$ is the set of states, $A^i$ is the action set of player $i$, 
$\mathbb{T}: X \times A^1 \times \dots \times A^n \times X \rightarrow [0,1]$ encodes $\mathbb{P}(x_{t+1}|x_t, a^1_t, \dots, a^n_t)$, the probability of transition to state $x_{t+1}$ from $x_t$, given the respective player actions. 
$R^i: X \times A^1 \times \dots \times A^n \rightarrow \mathbb{R}$ is the reward obtained by agent $i$ when transiting from $x_t$ while each player takes action $a^i_t$. 
$O^i$ is the set of observations for agent $i$. 
At every state, each agent receives an observation correlated with the state: $o^i : X \rightarrow O^i$. 
$\rho_0$ is a distribution over the initial states, and $\gamma \in [0,1]$ is a discounting factor. 

A \emph{policy} for agent $i$ is a distribution over actions, defined by $\pi^i: O^i \times A^i \rightarrow [0,1]$. 
Let $\bm{\pi}:=\{\pi^1, \dots, \pi^n\}$ and $s:=(s^1,\dots,s^n)$. 
Following \cite{lowe2017multi}, in the simplest case, $s^i = o^i$ for each agent $i$, and we use this for the remainder of the paper. 
Additional information about states of agents can be included since we compute \emph{centralized} value functions. 
Let $V_i ^{\bm{\pi}} (s)=V_i (s, \pi^1, \dots, \pi^n):= \mathbb{E}_{\bm{\pi}} [\sum_t \gamma^t R^i_t | s_0 = s, \bm{\pi}]$ 
and $Q_i ^{\bm{\pi}} (s, a^1,\dots,a^n):=R^i + \gamma \mathbb{E}_{s'}[V_i ^{\bm{\pi}} (s')]$ where $V_i ^{\bm{\pi}} (s)= \mathbb{E}_{\{a^i \sim \pi^i\}_{i=1}^n}[Q_i ^{\bm{\pi}} (s, a^1,\dots,a^n)]$.  

\subsection{Multi-agent Policy Gradient}

Policy gradient methods compute (an estimate of) the gradient of the expected reward with respect to the policy parameters. 
This is used to improve the policy by \emph{ascending} in the direction of the gradient.  

Assume that the policy $\pi^i$ for agent $i$ is parameterized by $\theta_i$. 
We denote this by $\pi_{\theta_i}$, and define $\bm{\pi_\theta}:=\{\pi_{\theta_1},\dots,\pi_{\theta_n}\}$. 
We assume that $\pi_{\theta_i}(a^i|o_i) > 0$ for all $\theta_i$ and is continuously differentiable with respect to $\theta_i$.  
The value of the parameterized policy $\bm{\pi_\theta}$ for agent $i$ is then: $J_i(\bm{\theta}):= \mathbb{E}_{\bm{\pi_\theta}} [\sum_t \gamma^t R^i_t]$. 

We use $-i$ to denote all agents other than agent $i$. 
Define the \emph{accumulated return} for agent $i$ from time $t$ onwards as $G_i(s_t,a^i_t,a^{-i}_t) : = \sum_{j=t}^\infty \gamma ^{j-t} R^i_j$.  Then, from the policy gradient theorem \cite{sutton2018reinforcement}:
\begin{align}
\nabla_{\theta_i} J_i(\bm{\theta}) &= \mathbb{E}_{\bm{\pi_\theta}} [ \nabla_{\theta_i} \log~\pi_{\theta_i} (a^i_t|o^i_t)~G_i(s_t,a^i_t,a^{-i}_t)]. \nonumber
\end{align}
When $G_i(s_t,a^i_t,a^{-i}_t)$ is replaced by $Q^{\bm{\pi_\theta}}_i(s_t,a^i_t,a^{-i}_t)= \mathbb{E}_{\bm{\pi_\theta}}[G_i(s_t,a^i_t,a^{-i}_t)]$ in the above equation, this is called an \emph{actor-critic} \cite{konda2000actor, sutton2018reinforcement}. 
In actor-critic methods, the \emph{critic} estimates a value function ($Q^{\bm{\pi_\theta}}_i(s_t,a^i_t,a^{-i}_t)$) and 
the \emph{actor} (policy) is learned by following a gradient that depends on the critic. 
$Q^{\bm{\pi_\theta}}_i(s_t,a^i_t,a^{-i}_t)$ can be further replaced by $Q^{\bm{\pi_\theta}}_i(s_t,a^i_t,a^{-i}_t) - b_i(s_t)$, where $b_i(s_t)$ is a baseline used to reduce the variance. 
When $b_i(s_t) = V ^{\bm{\pi_\theta}}_i(s_t)$, this difference is called the \emph{advantage}. 
The authors of \cite{lowe2017multi} extended actor-critic methods to work with deterministic policies, and termed their approach  multi-agent deep deterministic policy gradient (MADDPG). 

\section{Shaping Advice in Multi-Agent RL}\label{Methods}
\begin{figure}[!h]
	\centering
	\includegraphics[width=\linewidth]{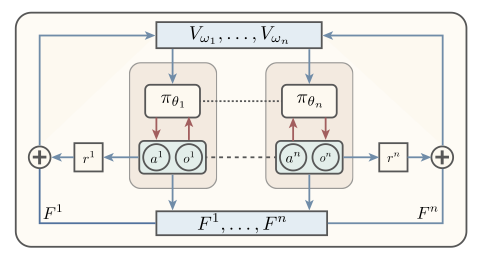} 
	\caption{Schematic of SAM. A centralized critic estimates value functions $V_{\omega_1},\dots,V_{\omega_n}$. Actions for an agent $i$ are sampled from its policy $\pi_{\theta_i}$ in a decentralized manner. Actions and observations of all agents are used to determine \emph{shaping advice} $F^1,\dots,F^n$. The advice $F^i$ is augmented to the reward $r^i$ from the RL environment. The workflow shown by blue arrows in the outer box is required only during training. During execution, only the workflow shown by the red arrows inside the inner boxes is needed.}\label{SAMARITANSchematic}
\end{figure}

This section introduces \emph{\textbf{SAM}} to augment \emph{shaping advice} to the reward supplied by the MARL environment to provide immediate feedback to agents on their actions. 
SAM uses the CTDE paradigm wherein agents share parameters with each other during the training phase, but execute decentralized policies using only their own observations at test-time. 
Figure \ref{SAMARITANSchematic} shows a schematic of SAM. 
We subsequently detail how the shaping advice is provided to the agents, and analyze the optimality and convergence of policies when using SAM.

\subsection{Shaping Advice}

Reward shaping augments the reward supplied by the environment ($R$) of the RL agent with an additional reward ($F$) called the \emph{shaping reward}, which typically encodes information about the RL environment. 
The shaping reward is \emph{potential-based} if it is given by the difference of potentials \cite{ng1999policy, wiewiora2003principled}. 
Potential-based methods enable incorporation of shaping rewards in a principled manner. 
This ensures that an agent will not be distracted from the task specified by the environment reward since the total potential-based reward obtained when starting from a state and returning to the same state at a future time is zero (when $\gamma = 1$) \cite{ng1999policy}. 

In the multi-agent case, \emph{shaping advice} for an agent at each time is a function of observations and actions of all agents. 
The shaping advice is augmented to the environment reward during training, 
and can take one of two forms, 
\emph{look-ahead} and \emph{look-back}, respectively given by: 
\begin{align}
&F^i_t(s_t,a^i_t,a^{-i}_t,s_{t+1},a^i_{t+1},a^{-i}_{t+1})\label{LAPBA}\\ &\qquad \qquad :=\gamma \phi_i(s_{t+1},a^i_{t+1},a^{-i}_{t+1}) - \phi_i(s_t,a^i_t,a^{-i}_t)\nonumber 
\end{align}
\begin{align}
&F^i_t(s_t,a^i_t,a^{-i}_t,s_{t-1},a^i_{t-1}, a^{-i}_{t-1})\label{LBPBA}\\&\qquad \qquad:=\phi_i(s_t,a^i_t,a^{-i}_t) - \gamma^{-1} \phi_i(s_{t-1},a^i_{t-1}, a^{-i}_{t-1})\nonumber
\end{align} 
We will denote by $\mathcal{G}'$ the $n$ player stochastic game that is identical to $\mathcal{G}$, but with rewards $R'^{i}:=R^i + F^i$ for each $i$. 

The shaping advice is a heuristic that uses knowledge of the environment and	 task, along with information available to the agent \cite{gupta2017cooperative}. 
For example, in the particle world tasks that we study, each agent has access to positions of other agents and of landmarks, relative to itself. 
This is used to design shaping advice for individual agents at each time step. 

\subsection{Centralized Critic}

SAM uses a centralized critic during the training phase. 
Information about observations and actions of all agents is used to learn a decentralized policy for each agent. 
One way to do this is by using an actor-critic framework, which combines policy gradients with \emph{temporal difference (TD)} techniques. 

At time $t$, the joint action $(a^1_t,\dots,a^n_t)$ is used to estimate the accumulated return for each agent $i$ as $r^i_t + \gamma V^i (s_{t+1})$. 
This quantity is called the \emph{TD-target}. 
Subtracting $V^i(s_t)$ from the TD-target gives the \emph{TD-error}, which is an unbiased estimate of the agent advantage \cite{sutton2018reinforcement}. 
Each actor can then be updated following a gradient based on this TD-error. 

We learn a separate critic for each agent like in \cite{lowe2017multi}. However, the learning process can be affected when rewards provided by the environment are sparse. 
SAM uses a \emph{potential-based} heuristic as shaping advice that is augmented to the reward received from the environment.
This resulting reward signal is less sparse and can be used by the agents to learn policies.

\subsection{Shaping Advice in Multi-Agent Actor-Critic}

We describe how to augment shaping advice to the multi-agent policy gradient to assign credit. 
We use the actor-critic framework with a centralized critic and decentralized actors. 

For an agent $i$, shaping advice $F^i$ is augmented to the environment reward $r^i$ at each time step. 
$F^i$ is specified by a difference of potentials (Eqn. (\ref{LAPBA}) or (\ref{LBPBA})). 
The centralized critic allows using observations and actions of all agents to specify $F^i$. 
Using look-ahead advice, $Q$-values in the modified game $\mathcal{G}'$ with reward $R^i+F^i$ and original game $\mathcal{G}$ with reward $R^i$ are related as \cite{wiewiora2003principled}:
\begin{align}
[Q^{\bm{\pi_\theta}}_i(s_t,a^i_t,a^{-i}_t)]_{\mathcal{G}}&=[Q^{\bm{\pi_\theta}}_i(s_t,a^i_t,a^{-i}_t)]_{\mathcal{G}'}\nonumber\\&\qquad \qquad+\phi_i(s_t,a^i_t,a^{-i}_t) \label{RelnGG'}
\end{align}

The accumulated return in $\mathcal{G}'$ for agent $i$ is then estimated by $r^i_t+ \gamma \phi_i(s_{t+1},a^i_{t+1},a^{-i}_{t+1}) - \phi_i(s_t,a^i_t,a^{-i}_t)+ \gamma V^i (s_{t+1})$. 
From Equation (\ref{RelnGG'}), we can add $\phi_i(s_t,a^i_t,a^{-i}_t)$ to the TD-target in $\mathcal{G}'$ at each time step to keep the policy gradient unbiased in $\mathcal{G}$. 

Let the critic and actor in SAM for agent $i$ be respectively parameterized by $\omega_i$ and $\theta_i$.
When the actor is updated at a slower rate than critic, the asymptotic behavior of the critic can be analyzed by keeping the actor fixed using \emph{two time-scale stochastic approximation} methods \cite{borkar2009stochastic}. 
For agent $i$, the TD-error at time $t$ is given by:
\begin{align}
\delta^i_t&:=r^i_t+F^i_t+\gamma V_{\omega_i} (s_{t+1}) - V_{\omega_i} (s_t). \label{TDErrori}
\end{align}
The update of the critic can be expressed as a first-order ordinary differential equation (ODE) in $\omega_i$, given by: 
\begin{align}
\dot{\omega}_i &= \mathbb{E}_{\bm{\pi_\theta}}[\delta^i \nabla_{\omega_i} V_{\omega_i} (s_t)] \label{CriticODE}
\end{align}
Under an appropriate parameterization of the value function, this ODE will converge to an asymptotically stable equilibrium, denoted $\omega_i (\bm{\theta})$. 
At this equilibrium, the TD-error for agent $i$ is $\delta_{t,\omega_i(\bm{\theta})}^i = r^i_t+F^i_t +\gamma V_{\omega_i(\bm{\theta})} (s_{t+1}) - V_{\omega_i(\bm{\theta})} (s_t)$. 

The update of the actor can then be determined by solving a first order ODE in $\theta_i$. 
With look-ahead advice, a term corresponding to the shaping advice at time $t$ will have to be added to ensure an unbiased policy gradient (Eqn. (\ref{RelnGG'})). 
This ODE can be written as: 
\begin{align}
\dot{\theta}_i &= \mathbb{E}_{\bm{\pi_\theta}}[(\delta_{t,\omega_i(\bm{\theta})}^i +\phi_i(s_t,a^i_t,a^{-i}_t))  \nabla_{\theta_i} \log~\pi_{\theta_i}(a_t^i|o^i_t)] \label{ActorODE}
\end{align} 
A potential term will not have to be added to ensure an unbiased policy gradient when utilizing look-back advice. This insight follows from Proposition 3 in \cite{xiao2019potential} since we consider decentralized policies. 

\subsection{Analysis}

In this part, we present a proof of the convergence of the actor and critic parameters when learning with shaping advice. 
We also demonstrate that convergence of policy gradients and values when using SAM implies convergence of these quantities in the absence of SAM. 
This will guarantee that policies learned in the modified stochastic game $\mathcal{G}'$ will be locally optimal in the original game $\mathcal{G}$. 

For agent $i$, the update dynamics of the critic can be expressed by the ODE in Eqn. (\ref{CriticODE}). 
Assuming parameterization of $V(s)$ over a linear family, this ODE will converge to an asymptotically stable equilibrium \cite{borkar2009stochastic}. 
The actor update is then given by the ODE in Eqn. (\ref{ActorODE}). 
The parameters associated with the critics are assumed to be updated on a faster timescale than those of the actors. 
Then, the behaviors of the actor and critic can be analyzed separately using two timescale stochastic approximation techniques \cite{borkar2009stochastic}. 

\begin{ass}
	We make the following assumptions:
	\begin{enumerate}
		\item At any time $t$, an agent is aware of the actions taken by all other agents. Rewards received by the agents at each time step are uniformly bounded.
		\item The Markov chain induced by the agent policies is irreducible and aperiodic. 
		\item For each agent $i$, the update of its policy parameter $\theta_i$ includes a projection operator $\Gamma_i$, which projects $\theta_i$ onto a compact set $\Theta_i$. We assume that $\Theta_i$ includes a stationary point of $\nabla_{\theta_i} J_i(\bm{\theta})$ for each $i$. 
		\item For each agent $i$, its value function is parameterized by a linear family. That is, $V_{\omega_i} (s) = \Phi_i \omega_i$, where $\Phi_i$ is a known, full-rank feature matrix for each $i$. 
		\item For each agent $i$, the TD-error at each time $t$ and the gradients $\nabla_{\omega_i} V_{\omega_i}(s)$ are bounded, and the gradients 
		$\nabla_{\theta_i} \log ~\pi_{\theta_i} (\cdot|s_t)$ are Lipschitz with bounded norm. 
		\item The learning rates satisfy $\sum_t \alpha^\theta_t = \sum_t \alpha^\omega_t = \infty$, $\sum_t[(\alpha^\theta_t)^2 + (\alpha^\omega_t)^2] < \infty$, $\lim_{t \rightarrow \infty} \frac{\alpha^\theta_t}{\alpha^\omega_t}=0$.
	\end{enumerate}
\end{ass}
We first state a useful result from \cite{kushner2012stochastic}.
\begin{lm}[\cite{kushner2012stochastic}]\label{KushnerClarkLemma}
	Let $\Gamma:\mathbb{R}^k \rightarrow \mathbb{R}^k$ be a projection onto a compact set $K \subset \mathbb{R}^k$. 
	Define 
	\begin{align}
	\hat{\Gamma}(h(x)):&=\lim_{\epsilon \downarrow 0} \frac{\Gamma(x+\epsilon h(x))-x}{\epsilon} \nonumber
	\end{align}
	for $x \in K$ and $h:\mathbb{R}^k \rightarrow \mathbb{R}^k$ continuous on $K$. Consider the update $x_{t+1} = \Gamma (x_t + \alpha_t(h(x_t) + \xi_{t,1}+\xi_{t,2}))$ and its associated ODE $\dot{x} = \hat{\Gamma}(h(x))$. 
	Assume that: 
	
	i) $\{\alpha_t\}$ is such that $\sum_t \alpha_t = \infty$, $\sum_t \alpha_t^2 < \infty$; 
	
	ii) $\{\xi_{t,1}\}$ is such that for all $\epsilon > 0$, $\lim_t \mathbb{P} (\sup_{n \geq t} || \sum_{\tau = t}^n \alpha_{\tau} \xi_{\tau,1}|| \geq \epsilon) = 0$; 
	
	iii) $\{\xi_{t,2}\}$ is an almost surely bounded random sequence, and $\xi_{t,2} \rightarrow 0$ almost surely. 

	Then, if the set of asymptotically stable equilibria of the ODE in $\dot{x}$ is compact, denoted $K_{eq}$, the updates $x_{t+1}$ will converge almost surely to $K_{eq}$. 
\end{lm}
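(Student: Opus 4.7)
The plan is to follow the classical ODE method for projected stochastic approximation. First I would introduce a continuous-time interpolation of the discrete iterates: setting $t_n := \sum_{k=0}^{n-1} \alpha_k$ and $\bar{x}(t_n) := x_n$, extended piecewise-linearly between grid points, so that the recursion $x_{t+1} = \Gamma(x_t + \alpha_t(h(x_t) + \xi_{t,1} + \xi_{t,2}))$ becomes a perturbed Euler scheme for the projected ODE $\dot{x} = \hat{\Gamma}(h(x))$. The three stated conditions are then exactly what is needed to control the three error sources: condition (i) is the standard Robbins--Monro step-size giving both infinite ``time horizon'' and summable quadratic error; condition (ii) provides a uniform tail bound on the accumulated martingale-type noise $\sum \alpha_\tau \xi_{\tau,1}$; and condition (iii) makes $\xi_{t,2}$ an asymptotically negligible perturbation.

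Second, I would show that $\bar{x}(\cdot)$ is an asymptotic pseudotrajectory of the ODE in the sense of Benaïm: for any fixed window length $\tau > 0$, the supremum distance on $[T, T+\tau]$ between $\bar{x}(T+\cdot)$ and the solution of $\dot{x} = \hat{\Gamma}(h(x))$ starting at $\bar{x}(T)$ converges to zero almost surely as $T \to \infty$. Here the continuity of $h$, the definition of $\hat{\Gamma}$ as the directional derivative of the projection $\Gamma$, and the compactness of $K$ (which forces $\{x_n\} \subset K$ to be bounded) combine with conditions (ii)--(iii) to drive the finite-window noise to zero. Arzelà--Ascoli then yields that every limit point of time-shifts of $\bar{x}$ is a trajectory of the projected ODE confined to $K$.

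Third, I would invoke the hypothesis that the set $K_{eq}$ of asymptotically stable equilibria of $\hat{\Gamma}(h(\cdot))$ is compact. A standard Lyapunov argument, applied to limit trajectories inside $K$, shows that any such trajectory is eventually trapped in an arbitrarily small neighborhood of $K_{eq}$; combined with the pseudotrajectory property this upgrades to $\mathrm{dist}(x_t, K_{eq}) \to 0$ almost surely.

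The main obstacle is the projection. Unlike the unconstrained case, the iterate is perturbed each step by a ``reflection'' term $\Gamma(x_t + \alpha_t(\cdot)) - (x_t + \alpha_t(\cdot))$ that is not driven to zero by $\alpha_t$ alone; it must be identified with the Bouligand-style derivative $\hat{\Gamma}(h(x))$ appearing in the limit ODE. Proving that this identification is valid uniformly in $t$ — and that the reflection term is continuous enough for the ODE limit to go through — is the delicate technical step, since $\hat{\Gamma}$ is in general only a one-sided directional derivative and may be discontinuous on $\partial K$. Once this is handled, the noise conditions (i)--(iii) carry the rest of the argument in a routine way.
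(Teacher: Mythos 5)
The paper does not actually prove this statement: Lemma~\ref{KushnerClarkLemma} is imported verbatim from the cited reference (Kushner--Clark/Kushner--Yin) and is used as a black box in Theorems~\ref{ThmCriticConv} and~\ref{ThmActorConv}, so there is no in-paper proof to compare yours against. Your outline is the right family of argument --- it is essentially the classical ODE method used in that reference: piecewise-linear interpolation on the timescale $t_n=\sum_{k<n}\alpha_k$, conditions (i)--(iii) controlling the Euler error, the accumulated noise $\sum_\tau \alpha_\tau\xi_{\tau,1}$, and the vanishing perturbation $\xi_{t,2}$, followed by an Arzel\`a--Ascoli compactness step showing limit points of time-shifts solve the projected ODE.

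As a proof, however, it has two genuine gaps, one of which you name but do not close and one you pass over. First, the reflection term: you correctly observe that $\Gamma(x_t+\alpha_t(\cdot))-(x_t+\alpha_t(\cdot))$ must be identified in the limit with the directional derivative $\hat{\Gamma}(h(x))$, which is only one-sided and can be discontinuous on $\partial K$; this is the technical heart of the Kushner--Clark theorem (handled there via the reflection-term decomposition, or in modern treatments by passing to a differential inclusion), and flagging it as ``the delicate step'' does not discharge it. Second, your final step is not ``routine'': showing that limit points of the interpolated trajectory are solutions of the projected ODE confined to $K$ only gives convergence of the iterates to an invariant (internally chain-transitive) set of that ODE, not to $K_{eq}$. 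To conclude convergence to the set of asymptotically stable equilibria you need an additional attractor argument --- e.g., that the iterates enter a compact subset of the domain of attraction of $K_{eq}$ infinitely often, which is precisely the extra hypothesis (or the role of the projection onto $K$ combined with the stability structure) in the Kushner--Clark statement. A generic ``Lyapunov argument applied to limit trajectories'' does not rule out the trajectory settling on some other invariant set of $\hat{\Gamma}(h(\cdot))$, so this step needs to be made explicit before the sketch becomes a proof.
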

Let $\{\mathcal{F}^\omega_t\}$ be the filtration where $\mathcal{F}^\omega_t:=\sigma(s_{\tau},$ $r^1_{\tau},\dots,r^n_{\tau},\omega_{1_\tau},\dots,\omega_{n_{\tau}}:\tau \leq t)$ is an increasing $\sigma-$algebra generated by iterates of $\omega_i$ up to time $t$. 
We first analyze behavior of the critic when parameters of the actor are fixed. 

\begin{thm}\label{ThmCriticConv}
	For a fixed policy $\bm{\pi_\theta}$, the update $\omega_i \leftarrow \omega_i - \alpha_t^\omega \delta^i_t \nabla_{\omega_i} V_{\omega_i} (s_t)$ converges almost surely to the set of asymptotically stable equilibria of the ODE $\dot{\omega}_i=h_i(\omega_i):=\mathbb{E}_{\bm{\pi_\theta}}[\delta^i_t \nabla_{\omega_i}V_{\omega_i}(s_t)|\mathcal{F}^\omega_t]$. 
\end{thm}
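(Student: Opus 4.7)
The plan is to recognize the critic update as a constant-policy linear TD(0) recursion and invoke the Kushner--Clark lemma (Lemma \ref{KushnerClarkLemma}) after explicitly identifying the stable set of the associated limiting ODE.

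First I would put the update into the form required by Lemma \ref{KushnerClarkLemma}. Define the martingale-difference noise
\[
M^\omega_t := \delta^i_t \nabla_{\omega_i} V_{\omega_i}(s_t) - h_i(\omega_{i,t}),
\]
so that the stochastic update can be rewritten as $\omega_{i,t+1} = \omega_{i,t} + \alpha^\omega_t(h_i(\omega_{i,t}) + M^\omega_t)$, matching the template of Lemma \ref{KushnerClarkLemma} with $\xi_{t,1} = M^\omega_t$, $\xi_{t,2} \equiv 0$, and a trivial (or sufficiently large) projection $\Gamma$. By construction $\{M^\omega_t\}$ is a martingale-difference sequence with respect to $\{\mathcal{F}^\omega_t\}$.

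Next I would verify the three hypotheses of Lemma \ref{KushnerClarkLemma}. The learning-rate conditions follow directly from Assumption 1.6. Assumption 1.5 uniformly bounds both $\delta^i_t$ and $\nabla_{\omega_i} V_{\omega_i}(s_t)$, so $\mathbb{E}[\|M^\omega_t\|^2 \mid \mathcal{F}^\omega_t]$ is uniformly bounded; combined with $\sum_t (\alpha^\omega_t)^2 < \infty$, the martingale $\sum_{\tau \leq t} \alpha^\omega_\tau M^\omega_\tau$ has summable quadratic variation and converges almost surely by the martingale convergence theorem, which yields tail condition (ii). Condition (iii) is vacuous since $\xi_{t,2} \equiv 0$. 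Then I would identify the asymptotically stable equilibria of $\dot\omega_i = h_i(\omega_i)$. Under Assumption 1.4, writing $\nabla_{\omega_i} V_{\omega_i}(s) = \Phi_i(s)$, and substituting the TD-error from \eqref{TDErrori} under the unique stationary distribution guaranteed by Assumption 1.2 yields
\[
h_i(\omega_i) = b_i - A_i\omega_i, \qquad A_i = \mathbb{E}_{\bm{\pi_\theta}}\!\left[\Phi_i(s)\bigl(\Phi_i(s) - \gamma \Phi_i(s')\bigr)^\top\right],
\]
with $A_i$ positive definite by the classical Tsitsiklis--Van Roy argument. The shaping advice $F^i$ only perturbs $b_i$ and leaves $A_i$ untouched, so the ODE has a unique globally asymptotically stable equilibrium $\omega_i(\bm{\theta}) = A_i^{-1} b_i$ and $K_{eq} = \{\omega_i(\bm\theta)\}$ is compact. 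Lemma \ref{KushnerClarkLemma} then delivers the claimed almost-sure convergence.

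The main obstacle I anticipate is the third step: confirming that augmenting the reward with the potential-based advice $F^i$ preserves positive definiteness of $A_i$ and does not destabilize the limiting ODE. Because $F^i_t$ is a deterministic function of the observed transition, it shifts the effective one-step reward and therefore modifies $b_i$ while leaving the linear part $-A_i\omega_i$ intact, so the Tsitsiklis--Van Roy conclusion carries over. A secondary subtlety is iterate boundedness in the absence of an explicit projection: one either appends a large-ball projection whose interior contains $\omega_i(\bm\theta)$ (so that the $\Gamma$ hypothesis of Lemma \ref{KushnerClarkLemma} holds trivially), or invokes Borkar's stability criterion from \cite{borkar2009stochastic}, using that $h_i(c\omega)/c \to -A_i\omega$ with $-A_i$ Hurwitz as $c \to \infty$.
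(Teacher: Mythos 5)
Your proposal is correct and follows essentially the same route as the paper: decompose the update into its conditional mean $h_i(\omega_i)$ plus martingale-difference noise, verify the Kushner--Clark conditions via boundedness of $\delta^i_t$ and $\nabla_{\omega_i}V_{\omega_i}$ together with the martingale convergence theorem, and use the linear parameterization $V_{\omega_i}=\Phi_i\omega_i$ to conclude the ODE is linear with a unique asymptotically stable equilibrium (your positive-definite $A_i$ via Tsitsiklis--Van Roy is the same stability fact the paper states as the eigenvalues of $\gamma T_{\pi}-I$ having negative real parts). Your closing remarks on how $F^i$ only shifts the affine term and on handling iterate boundedness via a projection or Borkar's criterion are sound refinements of points the paper leaves implicit, but they do not change the argument.
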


\begin{proof}
	Let $\xi^i_{t,1}:=\delta^i_t \nabla_{\omega_i} V_{\omega_i} (s_t)-\mathbb{E}_{\bm{\pi_\theta}}[\delta^i_t \nabla_{\omega_i}V_{\omega_i}(s_t)|\mathcal{F}^\omega_t]$. 
	Then, the $\omega_i$ update can be written as $\omega_i \leftarrow \omega_i- \alpha_t^\omega[h_i(\omega_i)+\xi^i_{t,1}]$, where  
	$h_i(\omega_i)$ is continuous in $\omega_i$. 
	Since $\delta^i_t$ and $\nabla_{\omega_i} V_{\omega_i}(s)$ are bounded, $\xi^i_{t,1}$ is almost surely bounded. 
	
	Let $M^i_t:= \sum_{\tau = 0}^t \alpha^\omega_{\tau}  \xi^i_{\tau,1}$. 
	Then $\{M^i_t\}$ is a martingale\footnote{A martingale \cite{williams1991probability} is a stochastic process $S_1,S_2,\dots$ that satisfies $\mathbb{E}(|S_n| < \infty)$ and $\mathbb{E}(S_{n+1}|S_1,\dots,S_n) = S_n$ for each $n = 1,2,\dots.$.}
	, and $\sum_t ||M^i_t-M^i_{t-1}||^2 = \sum_t ||\alpha^\omega_t \xi^i_{t,1}||^2 < \infty$ almost surely. 
	Therefore, from the martingale convergence theorem \cite{williams1991probability}, the sequence $\{M^i_t\}$ converges almost surely. 
	Therefore, the conditions in Lemma \ref{KushnerClarkLemma} are satisfied. 
	
	Since $V_{\omega_i} = \Phi_i \omega_i$, with $\Phi_i$ a full-rank matrix, $h_i(\omega_i)$ is a linear function, and the ODE will have a unique equibrium point. 
	This will be an asymptotically stable equilibrium since ODE dynamics will be governed by a matrix of the form $(\gamma T_{\pi} - I)$. 
	Here, $I$ is an identity matrix, and $T_{\pi}$ is a stochastic state-transition matrix under policy $\pi$, whose eigen-values have (strictly) negative real parts \cite{prasad2015two}. 
	Denote this asymptotically stable equilibrium by $\omega_i(\bm{\theta})$. 
\end{proof}
We can now analyze the behavior of the actor, assuming that the critic parameters have converged to an asymptotically stable equilibrium. 
With $\omega_i (\bm{\theta})$ a limit point of the critic update, let $\delta_{t,\omega_i(\bm{\theta})}^i = r^i_t+F^i_t +\gamma V_{\omega_i(\bm{\theta})} (s_{t+1}) - V_{\omega_i(\bm{\theta})} (s_t)$. 
When using look-ahead or look-back advice, define $\tilde{\delta}^i_{t,\omega_i(\bm{\theta})}$ as:
\begin{align}
\begin{split}
&\text{look-ahead:}\quad\tilde{\delta}^i_{t,\omega_i(\bm{\theta})}:= (\delta_{t,\omega_i(\bm{\theta})}^i+\phi_i(s_t,a^i_t,a^{-i}_t)) \\
&\text{look-back:} \quad\tilde{\delta}^i_{t,\omega_i(\bm{\theta})}:= \delta_{t,\omega_i(\bm{\theta})}^i
. 
\end{split}\label{delta}
\end{align}

Let $\{\mathcal{F}^\theta_t\}$ be a filtration where $\mathcal{F}^\theta_t:=\sigma(\bm{\theta}_{\tau}:=[\theta_{1_\tau}\dots \theta_{n_\tau}]:\tau \leq t)$ is an increasing $\sigma-$algebra generated by iterates of $\theta_i$ up to time $t$. 

\begin{thm} \label{ThmActorConv} 
	The update $\theta_i \leftarrow \Gamma_i[\theta_i + \alpha_t^\theta \tilde{\delta}^i_t $ $ \nabla_{\theta_i} \log~\pi_{\theta_i}(a_t^i|o^i_t)]$ converges almost surely to the set of asymptotically stable equilbria of the ODE $\dot{\theta}_i = \hat{\Gamma}_i(h_i(\theta_i))$, where $h_i(\theta_i) = \mathbb{E}_{\bm{\pi_\theta}}[\tilde{\delta}^i_{t,\omega_i(\bm{\theta})}\nabla_{\theta_i} \log~\pi_{\theta_i}(a_t^i|o^i_t)|\mathcal{F}^\theta_t]$.
\end{thm}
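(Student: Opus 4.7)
The plan is to invoke Lemma \ref{KushnerClarkLemma} with $x_t = \theta_{i,t}$, $\Gamma = \Gamma_i$, $K = \Theta_i$, and $h = h_i$. The key preparatory step is to decompose the stochastic update direction $Y_t := \tilde{\delta}^i_t \nabla_{\theta_i}\log\pi_{\theta_i}(a^i_t|o^i_t)$ around the target mean field by adding and subtracting the quantity $\tilde{\delta}^i_{t,\omega_i(\bm{\theta})}\nabla_{\theta_i}\log\pi_{\theta_i}(a^i_t|o^i_t)$ and its conditional expectation $h_i(\theta_i)$ given $\mathcal{F}^\theta_t$. This yields $Y_t = h_i(\theta_i) + \xi^i_{t,1} + \xi^i_{t,2}$, where
\begin{align}
\xi^i_{t,1} &:= \tilde{\delta}^i_{t,\omega_i(\bm{\theta})}\nabla_{\theta_i}\log\pi_{\theta_i}(a^i_t|o^i_t) - h_i(\theta_i), \nonumber \\
\xi^i_{t,2} &:= \bigl(\tilde{\delta}^i_t - \tilde{\delta}^i_{t,\omega_i(\bm{\theta})}\bigr)\nabla_{\theta_i}\log\pi_{\theta_i}(a^i_t|o^i_t). \nonumber
\end{align}
Under Assumptions 1, 4, and 5, $h_i$ is continuous in $\theta_i$, and the Lipschitz plus boundedness conditions on $\nabla_{\theta_i}\log\pi_{\theta_i}$ together with the boundedness of the shaped TD-error ensure that the stochastic integrand has bounded second moment. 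The step-size condition (i) of Lemma \ref{KushnerClarkLemma} is then immediate from Assumption 6.

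For condition (ii), I would verify that the partial sums $M^i_t := \sum_{\tau=0}^{t}\alpha^\theta_\tau\,\xi^i_{\tau,1}$ form a zero-mean martingale with respect to $\{\mathcal{F}^\theta_t\}$, since $\xi^i_{t,1}$ is a martingale difference by construction. Boundedness of $\tilde{\delta}^i_{t,\omega_i(\bm{\theta})}$ and of $\nabla_{\theta_i}\log\pi_{\theta_i}$ gives uniformly bounded second moments, so $\sum_t \mathbb{E}\|M^i_t-M^i_{t-1}\|^2 = \sum_t (\alpha^\theta_t)^2 \mathbb{E}\|\xi^i_{t,1}\|^2 < \infty$ by Assumption 6. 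The martingale convergence theorem then yields almost-sure convergence of $\{M^i_t\}$, and Doob's maximal inequality delivers the tail-sum bound required by (ii). This step mirrors the argument made in the proof of Theorem \ref{ThmCriticConv} and is essentially routine.

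For condition (iii), the task is to show $\xi^i_{t,2}\to 0$ almost surely. Because $V_{\omega_i}(s)=\Phi_i\omega_i$ by Assumption 4, the map $\omega_i\mapsto \tilde{\delta}^i_{t,\omega_i}$ is affine, so $\|\xi^i_{t,2}\|\le C\|\omega_{i,t}-\omega_i(\bm{\theta}_t)\|$ for a deterministic constant $C$ coming from the uniform bound on $\nabla_{\theta_i}\log\pi_{\theta_i}$ in Assumption 5. The two-timescale step-size gap $\alpha^\theta_t/\alpha^\omega_t\to 0$ means that, viewed on the critic's timescale, the actor parameters $\bm{\theta}_t$ are quasi-static; the standard two-timescale argument of \cite{borkar2009stochastic}, applied together with Theorem \ref{ThmCriticConv} at every frozen actor parameter, then gives $\omega_{i,t}-\omega_i(\bm{\theta}_t)\to 0$ almost surely. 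Hence $\xi^i_{t,2}\to 0$ almost surely, and it remains uniformly bounded by the boundedness assumptions, so (iii) holds. With all three conditions verified, the lemma gives almost-sure convergence of $\{\theta_{i,t}\}$ to the asymptotically stable equilibrium set of $\dot{\theta}_i=\hat{\Gamma}_i(h_i(\theta_i))$; compactness of this set follows from Assumption 3.

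The main obstacle is the verification of (iii): the shaped TD-error $\tilde{\delta}^i_t$ uses the \emph{running} critic parameter $\omega_{i,t}$, not its $\bm{\theta}_t$-indexed equilibrium $\omega_i(\bm{\theta}_t)$, and $\omega_i(\cdot)$ itself is moving because $\bm{\theta}_t$ is updated at the same time. A clean argument therefore requires invoking the two-timescale framework to decouple the timescales, using $\lim_t \alpha^\theta_t/\alpha^\omega_t=0$ in Assumption 6 so that the critic tracks its slowly-varying target. The linearity of $V_{\omega_i}$ in $\omega_i$ provided by Assumption 4 is what converts this tracking error into a vanishing bias in the actor update and is essential for bounding $\xi^i_{t,2}$ by $\|\omega_{i,t}-\omega_i(\bm{\theta}_t)\|$; beyond this point, the remainder of the argument is a direct application of the Kushner-Clark machinery.
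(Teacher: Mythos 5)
Your proposal is correct and follows essentially the same route as the paper: decompose the update into the mean field $h_i(\theta_i)$ plus a martingale-difference noise term and a tracking-error term, handle the former by martingale convergence and the latter by the almost-sure convergence of the critic (Theorem \ref{ThmCriticConv}) under the two-timescale step sizes, and conclude via Lemma \ref{KushnerClarkLemma}. The only cosmetic difference is the bookkeeping of the decomposition: the paper puts the running TD-error into $\xi^i_{t,1}$ and the conditional expectation of the tracking error into $\xi^i_{t,2}$, while you center the equilibrium TD-error term and keep the raw tracking error in $\xi^i_{t,2}$; both split the same quantity and satisfy the same conditions of the lemma.
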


\begin{proof}
	Let $\xi^i_{t,1} := \tilde{\delta}^i_t \nabla_{\theta_i} \log \pi_{\theta_i}(a_t^i|o^i_t) - \mathbb{E}_{\bm{\pi_\theta}}[\tilde{\delta}^i_t \nabla_{\theta_i} \log\pi_{\theta_i}(a_t^i|o^i_t) | \mathcal{F}^\theta_t]$ and $\xi^i_{t,2}:= \mathbb{E}_{\bm{\pi_\theta}}[(\tilde{\delta}^i_t - \tilde{\delta}^i_{t,\omega_i(\bm{\theta})})\nabla_{\theta_i} \log \pi_{\theta_i}(a_t^i|o^i_t) | \mathcal{F}^\theta_t]$. 
	Then, the update of $\theta_i$ can be written as $\theta_i \leftarrow \theta_i + \alpha^\theta_t [h_i(\theta_i) +\xi^i_{t,1} + \xi^i_{t,2}]$, where 
	$h_i(\theta_i)$ is continuous in $\theta_i$. We now need to verify that the conditions in Lemma \ref{KushnerClarkLemma} are satisfied. 
	
	Since the critic parameters converge almost surely to a fixed point, $\tilde{\delta}^i_t - \tilde{\delta}^i_{t,\omega_i(\bm{\theta})} \rightarrow 0$ almost surely. Therefore, $\xi^i_{t,2} \rightarrow 0$ almost surely, verifying 
	Condition iii) in Lemma \ref{KushnerClarkLemma}. 
	
	Since $\tilde{\delta}^i_t$ and $\nabla_{\theta_i} \log \pi_{\theta_i}(a_t^i|o^i_t)$ are bounded, $\xi^i_{t,1}$ is continuous in $\theta_i$ and $\theta_i$ belongs to a compact set, the sequence $\{\xi^i_{t,1}\}$ is bounded almost surely \cite{rudin1964principles}. 
	If $M^i_t:= \sum_{\tau = 0}^t \alpha^\theta_{\tau}  \xi^i_{\tau,1}$, then $\{M^i_t\}$ is a martingale, and $\sum_t ||M^i_t-M^i_{t-1}||^2 = \sum_t ||\alpha^\theta_t \xi^i_{t,1}||^2 < \infty$ almost surely. 
	Then, 
	$\{M^i_t\}$ converges almost surely \cite{williams1991probability}, satisfying  
	Condition ii) of Lemma \ref{KushnerClarkLemma}. 
	Condition i) is true by assumption, completing the proof.
\end{proof} 
Theorems \ref{ThmCriticConv} and \ref{ThmActorConv} demonstrate the convergence of critic and actor parameters in the stochastic game with the shaped reward, $\mathcal{G}'$. 
However, our objective is to provide a guarantee of convergence in the original game $\mathcal{G}$. 
We establish such a guarantee when parameterizations of the value function results in small errors, and policy gradients in $\mathcal{G}'$ are bounded. 

\begin{df}
	For a probability measure $\mu$ on a finite set $\mathcal{M}$, the $\ell_2-$norm of a function $f$ with respect to $\mu$ is defined as $||f||_{\mu}:=\bigg[\int_{\mathcal{M}} |f(X)|^2 d\mu (X)\bigg]^{\frac{1}{2}}=\bigg[\mathbb{E}_{\mu}(|f(X)|^2)\bigg]^{\frac{1}{2}}$. 
\end{df}
\begin{prop}\label{BoundInOrigGame}
	In the stochastic game $\mathcal{G}'$, let  $(\gamma+1)||V_i^{\pi_{\bm{\theta}}}(s) - V_{\omega_i(\bm{\theta})}(s)||_{\pi_{\bm{\theta}}} \leq \mathcal{E}_i(\bm{\theta})$, and let $||\nabla_{\theta_i} \log\pi_{\theta_i}||_{\pi_{\bm{\theta}}} \leq C_i(\bm{\theta})$. 
	Let $(\bm{\theta}^*, \omega(\bm{\theta})^*)$ be the set of limit points of SAM. 
	
	Then, in the original stochastic game $\mathcal{G}$, for each agent $i$, $||\nabla_{\theta_i}J_i(\bm{\theta}^*)||_2 \leq C_i(\bm{\theta}^*)\mathcal{E}_i(\bm{\theta}^*)$.
\end{prop}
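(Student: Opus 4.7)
My plan is to identify $\nabla_{\theta_i} J_i(\bm{\theta}^*)$ in the original game $\mathcal{G}$ with a residual between the ``ideal'' actor update that uses the exact $\mathcal{G}'$-value $V_i^{\bm{\pi_{\theta^*}}}$ and the critic-approximated actor update that vanishes at the SAM limit point, and then bound this residual in the $\ell_2(\bm{\pi_{\theta^*}})$-norm from the preceding Definition. The first step is to rewrite the $\mathcal{G}$-policy gradient in $\tilde{\delta}^i$-form: starting from the policy gradient theorem $\nabla_{\theta_i}J_i(\bm{\theta})=\mathbb{E}_{\bm{\pi_\theta}}[Q_i^{\bm{\pi_\theta}}\,\nabla_{\theta_i}\log\pi_{\theta_i}]$ in $\mathcal{G}$, I substitute (\ref{RelnGG'}) to pass to $\mathcal{G}'$, unroll $Q_i^{\bm{\pi_\theta}}$ in $\mathcal{G}'$ by one Bellman step using the exact value $V_i^{\bm{\pi_\theta}}$, and add the state-only baseline $-V_i^{\bm{\pi_\theta}}(s_t)$ (which integrates to zero against $\nabla_{\theta_i}\log\pi_{\theta_i}$ since $\mathbb{E}[\nabla_{\theta_i}\log\pi_{\theta_i}(a^i_t|o^i_t)\mid o^i_t]=0$). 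For look-ahead advice, the $+\phi_i(s_t,a^i_t,a^{-i}_t)$ inherited from (\ref{RelnGG'}) cancels the $-\phi_i(s_t,\cdot)$ inside $F^i_t$, yielding $\nabla_{\theta_i}J_i(\bm{\theta})=\mathbb{E}_{\bm{\pi_\theta}}[\tilde{\delta}^i_{t,\mathrm{true}}\,\nabla_{\theta_i}\log\pi_{\theta_i}]$, where $\tilde{\delta}^i_{t,\mathrm{true}}$ is (\ref{delta}) with $V_{\omega_i(\bm{\theta})}$ replaced by the exact $V_i^{\bm{\pi_\theta}}$ of $\mathcal{G}'$; the look-back case is handled analogously via Proposition~3 of \cite{xiao2019potential}.

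Next, by Theorem~\ref{ThmActorConv} together with Assumption~3 (which places a stationary point of $\nabla_{\theta_i}J_i$ in the interior of $\Theta_i$, so the projection is inactive at the limit), the actor first-order condition at $(\bm{\theta}^*,\omega(\bm{\theta}^*))$ reads $\mathbb{E}_{\bm{\pi_{\theta^*}}}[\tilde{\delta}^i_{t,\omega_i(\bm{\theta}^*)}\,\nabla_{\theta_i}\log\pi_{\theta_i^*}]=0$. Subtracting this zero from the identity of the first step, the $\phi_i$-contributions common to $\tilde{\delta}^i_{t,\mathrm{true}}$ and $\tilde{\delta}^i_{t,\omega_i(\bm{\theta}^*)}$ cancel, leaving $\nabla_{\theta_i}J_i(\bm{\theta}^*)=\mathbb{E}_{\bm{\pi_{\theta^*}}}[(\gamma\,\Delta V_i(s_{t+1})-\Delta V_i(s_t))\,\nabla_{\theta_i}\log\pi_{\theta_i^*}]$ with $\Delta V_i(s):=V_i^{\bm{\pi_{\theta^*}}}(s)-V_{\omega_i(\bm{\theta}^*)}(s)$. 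Applying the triangle inequality and then Cauchy--Schwarz in $\ell_2(\bm{\pi_{\theta^*}})$ (using Assumption~2 so that $s_t$ and $s_{t+1}$ share the same stationary distribution), each of the two inner products is bounded by $\|\Delta V_i\|_{\bm{\pi_{\theta^*}}}\cdot\|\nabla_{\theta_i}\log\pi_{\theta_i^*}\|_{\bm{\pi_{\theta^*}}}$, and summing yields $\|\nabla_{\theta_i}J_i(\bm{\theta}^*)\|_2\le(\gamma+1)\|\Delta V_i\|_{\bm{\pi_{\theta^*}}}\,C_i(\bm{\theta}^*)\le C_i(\bm{\theta}^*)\,\mathcal{E}_i(\bm{\theta}^*)$.

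The main obstacle is the first step: one must carefully track how the action-dependent potential $\phi_i$ and the look-ahead term $F^i_t$ interact with the Bellman unrolling of $Q_i^{\bm{\pi_\theta}}$ in $\mathcal{G}'$ so that one lands exactly on $\tilde{\delta}^i_{t,\mathrm{true}}$. In particular, the Markov decomposition $(s_t,a^i_t,a^{-i}_t)\to s_{t+1}\to(a^i_{t+1},a^{-i}_{t+1})$ must be invoked to take conditional expectations of $\phi_i(s_{t+1},a^i_{t+1},a^{-i}_{t+1})$ given $s_{t+1}$ wherever next-step action-dependent potentials appear, and the look-back form (\ref{LBPBA}) needs a parallel treatment since $\tilde{\delta}^i$ in (\ref{delta}) omits the extra $\phi_i(s_t,a^i_t,a^{-i}_t)$ there. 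Once this identity is secured, the remaining two steps are essentially mechanical.
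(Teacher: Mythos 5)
Your proposal is correct and follows essentially the same route as the paper: rewrite $\nabla_{\theta_i}J_i$ in $\mathcal{G}$ via Eq.~(\ref{RelnGG'}) with a one-step Bellman unrolling in $\mathcal{G}'$ using the exact value plus a state-only baseline, subtract the actor stationarity condition $\dot{\theta}_i=0$ from Eq.~(\ref{ActorODE}) at the SAM limit so that the reward, advice, and $\phi_i$ terms cancel, and bound the remaining value-error difference by Cauchy--Schwarz to obtain $(\gamma+1)\|V_i^{\pi_{\bm{\theta}}}-V_{\omega_i(\bm{\theta})}\|_{\pi_{\bm{\theta}}}\,C_i(\bm{\theta}^*)\le C_i(\bm{\theta}^*)\mathcal{E}_i(\bm{\theta}^*)$. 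Your added remarks on the inactive projection and on conditioning out $\phi_i(s_{t+1},a_{t+1})$ are finer bookkeeping of steps the paper treats implicitly, not a different argument.
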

\begin{proof}
	Let $\Theta_{i_{eq}}$ denote the set of asymptotically stable equilibria of the ODE in $\theta_i$. 
	Let $\Theta_{eq}:=\Theta_{1_{eq}} \times \dots \times \Theta_{n_{eq}}$. 
	Then, in the set $\Theta_{eq}$, $\dot{\theta}_i = 0$ for each agent $i$. 
	
	Consider a policy $\pi_{\bm{\theta}}$, $\bm{\theta} \in \Theta_{eq}$. 
	In the original game $\mathcal{G}$, 
	\begin{align}
	\nabla_{\theta_i}J_i(\bm{\theta}) &= \mathbb{E}_{\bm{\pi_\theta}} [ \nabla_{\theta_i} \log\pi_{\theta_i} (a^i_t|o^i_t)Q^{\bm{\pi_\theta}}_i(s_t,a^i_t,a^{-i}_t)]\label{MAPolGrad}
	\end{align}
	
	From Equation (\ref{RelnGG'}), $[Q^{\bm{\pi_\theta}}_i(s_t,a^i_t,a^{-i}_t)]_{\mathcal{G}}=[Q^{\bm{\pi_\theta}}_i(s_t,a^i_t,a^{-i}_t)]_{\mathcal{G}'}$ $+\phi_i(s_t,a^i_t,a^{-i}_t)$. 
	Since we use an advantage actor critic, we replace $[Q^{\bm{\pi_\theta}}_i(s_t,a^i_t,a^{-i}_t)]_{\mathcal{G}'}$ with an advantage term, defined as $[Q^{\bm{\pi_\theta}}_i(s_t,a^i_t,a^{-i}_t)]_{\mathcal{G}'}-V^{\bm{\pi_\theta}}_i(s_t)$. 
	Substituting these quantities in Equation (\ref{MAPolGrad}), 
	\begin{align}
	\nabla_{\theta_i}J_i(\bm{\theta}) &= \mathbb{E}_{\bm{\pi_\theta}}[\nabla_{\theta_i} \log\pi_{\theta_i} (a^i_t|o^i_t).\label{MAPolGradG'}\\&\qquad \qquad (r^i_t+F^i_t+\gamma V_i^{\pi_{\bm{\theta}}}(s_{t+1})\nonumber\\&\qquad \qquad-V^{\bm{\pi_\theta}}_i(s_t)+ \phi_i(s_t,a^i_t,a^{-i}_t))]\nonumber
	\end{align}
	
	At equilibrium,  $\dot{\theta}_i= 0$ in Equation (\ref{ActorODE}). 
	Subtracting this from Equation (\ref{MAPolGradG'}), 
	\begin{align}
	&\nabla_{\theta_i}J_i(\bm{\theta}) - \dot{\theta}_i =\nabla_{\theta_i}J_i(\bm{\theta}) \nonumber\\
	&=\mathbb{E}_{\bm{\pi_\theta}}[\nabla_{\theta_i} \log\pi_{\theta_i} (a^i_t|o^i_t). \nonumber \\&\qquad \qquad(\gamma (V_i^{\pi_{\bm{\theta}}}(s_{t+1}) - V_{\omega_i(\bm{\theta})} (s_{t+1}))\nonumber\\&\qquad \qquad- (V_i^{\pi_{\bm{\theta}}}(s_{t}) - V_{\omega_i(\bm{\theta})}(s_t)))] \nonumber
	\end{align} 
	
	Using the Cauchy-Schwarz inequality, 
	\begin{align}
	||\nabla_{\theta_i}J_i(\bm{\theta}^*)||_2 &\leq |\gamma +1| .||V_i^{\pi_{\bm{\theta}}}(s) - V_{\omega_i(\bm{\theta})}(s)||_{\pi_{\bm{\theta}}}.\nonumber\\&\qquad \qquad \qquad ||\nabla_{\theta_i} \log\pi_{\theta_i}||_{\pi_{\bm{\theta}}}\nonumber\\
	&\leq C_i(\bm{\theta}^*)\mathcal{E}_i(\bm{\theta}^*)\label{CauchSchwIneq}
	\end{align}
	
	Each term on the right side of Eqn. (\ref{CauchSchwIneq}) is bounded. Thus, $J_i(\bm{\theta})$ converges for each agent $i$ in the original game $\mathcal{G}$, even though policies are synthesized in the modified game $\mathcal{G}'$. 
\end{proof}

Proposition \ref{BoundInOrigGame} demonstrates that the additional reward $F^i$ provided by SAM to guide the agents does not distract them from accomplishing the task objective that is originally specified by the environment reward $R^i$. 

\subsection{Algorithm}
Algorithm \ref{algo:MAAC+PBA} desecribes SAM. 
The shaping advice is specified as a difference of potential functions (\emph{Line 15}), and is added to the reward received from the environment. 
We use an advantage-based actor-critic, and use the TD-error to estimate this advantage (\emph{Line 16}). This is used to update the actor and critic parameters for each agent (\emph{Lines 18-19}).
\begin{algorithm}[!h]
	\small
	\caption{SAM: Shaping Advice in deep Multi-agent RL}
	\label{algo:MAAC+PBA}
	\begin{algorithmic}[1]

		\REQUIRE{For each agent $i$: parameters $\theta_i$ (for agent policy), $\omega_i$ (for agent value function); Shaping advice $\phi_i(s,a^i,a^{-i})$.\\\quad \hspace{1mm}	 Learning rates $\alpha^\theta, \alpha^\omega$; Episode limit $T_{max}$.}
		\STATE{$T = 0$}
		\REPEAT
		\STATE{$t \leftarrow -1$; $\phi_i(s_{-1},a^i_{-1},a^{-i}_{-1}) = 0$ for all $i$}
		\STATE{Initialize information $s_0 = [o_0^1,\dots,o_0^n]$}
		\REPEAT
		\STATE{$t \leftarrow t+1$}
		\FOR{agent $i=1$ to $n$}
		\STATE{sample $a^i_t \sim \pi_{\theta_i}(\cdot|o^i_t)$}
		\ENDFOR
		\STATE{Take action $a_t = [a^1_t,\dots,a^n_t]$, observe new information $s_{t+1}$ and obtain reward $r^i_t$ for each agent. Use $a_t$ to determine $\phi_i(s_t,a_t)$ for all agents}
		\IF{$s_{t+1}$ is terminal}
		\STATE{$V_{\omega_i} (s_{t+1}) =0$}
		\ENDIF
		\FOR{agent $i=1$ to $n$}
		\STATE{compute $F^i_t$ based on equations (\ref{LAPBA}) and (\ref{LBPBA})}
		\STATE{TD-error: $\delta^i_t:=r^i_t+F^i_t+\gamma V_{\omega_i} (s_{t+1}) - V_{\omega_i} (s_t)$}
		\STATE{compute $\tilde{\delta}^i_t$ based on equations (\ref{delta})}
		\STATE{Update actor: $\theta_i \leftarrow \Gamma_i[\theta_i + \alpha_t^\theta \tilde{\delta}^i_t  \nabla_{\theta_i} \log~\pi_{\theta_i}(a_t^i|o^i_t)]$}
		\STATE{Update critic: $\omega_i \leftarrow \omega_i - \alpha_t^\omega \delta^i_t \nabla_{\omega_i} V_{\omega_i} (s_t)$}
		\ENDFOR
		\UNTIL{$s_{t+1}$ is terminal}
		\STATE{$T \leftarrow T+1$}
		\UNTIL{$T>T_{max}$}
	\end{algorithmic}
\end{algorithm}

\section{Experiments}\label{Experiments}
This section describes the multi-agent tasks that we evaluate SAM on, and these include tasks with cooperative and competitive objectives. 
In each case, the rewards provided to the agents are sparse, which affects the agents' ability to obtain immediate feedback on the quality of their actions at each time-step. 
Shaping advice provided by SAM is used to guide the agents to obtain higher rewards than in the case without advice. 
We conclude the section by presenting the results of our experiments evaluating SAM on these tasks. 
\subsection{Task Descriptions and Shaping Advice}
\begin{figure}[!h]
	\centering
	\includegraphics[width=\linewidth]{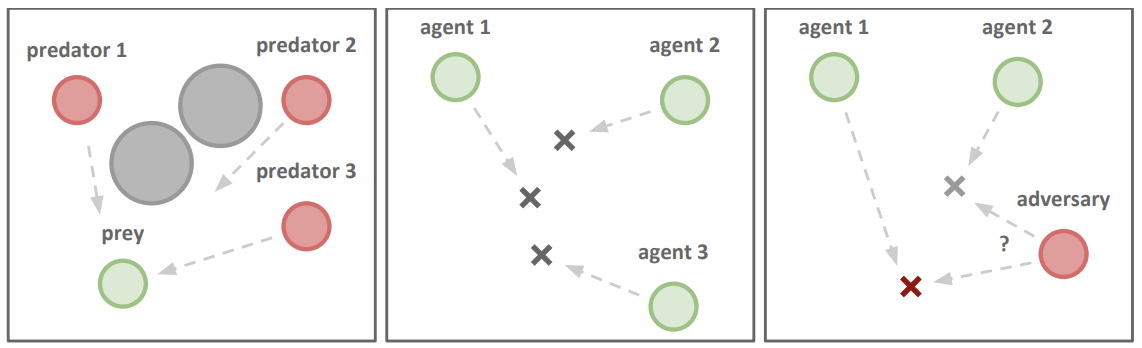} 
	\caption{Representations of tasks from the Particle World Environment \cite{lowe2017multi} that we study. (\emph{Left to Right}) Predator-Prey (PP), Cooperative Navigation (CN), and Physical Deception (PD). In PP, predators (red) seek to catch the prey (green) while avoiding obstacles (grey). 
	In CN, agents (green) each seek to navigate to a different landmark ($\times$) and are penalized for collisions with each other. In PD, one of the agents (green) must reach the true landmark (red $\times$), while preventing the adversary from reaching this landmark. In all tasks, rewards are \emph{sparse}. Agents receive a reward or penalty only when a corresponding reachability or collision criterion is satisfied. 
	}\label{PartWorldEnv}
\end{figure}

We examine three tasks from the \emph{Particle World} environment \cite{lowe2017multi} where multiple agents share a two-dimensional space with continuous states and actions. 
An illustration of the tasks is shown in Figure \ref{PartWorldEnv}, and we describe them below. 

\subsubsection{Predator-Prey} 
This task has $N$ predator agents who cooperate to capture $1$ faster-moving prey. 
Predators are rewarded when one of them collides with the prey, while the prey is penalized for the collision. 
The reward at other times is zero. 
Two landmarks impede movement of the agents. 

\subsubsection{Cooperative Navigation}
This task has $N$ agents and $N$ landmarks. 
Agents are each rewarded $r$ when an agent reaches a landmark, and penalized for collisions with each other. 
The reward at other times is zero. 
Therefore, the maximum rewards agents can obtain is $rN$.  
Thus, agents must learn to \emph{cover} the landmarks, and not collide with each other. 
\subsubsection{Physical Deception}
This task has $1$ adversary, $N$ agents, and $N$ landmarks. 
Only one landmark is the true target. 
Agents are rewarded when any one reaches the target, and penalized if the adversary reaches the target. 
At all other times, the agents get a reward of zero. 
An adversary also wants to reach the target, but it does not know which landmark is the target landmark. 
Thus, agents have to learn to split up and cover the landmarks to deceive the adversary. 

\begin{table*}[]
	\centering
	\begin{tabular}{|c|c|c|}
		\hline
		\textbf{Task}            & \textbf{$\phi_i(s_t,a^i_t,a^{-i}_t)$: SAM-Uniform} & \textbf{$\phi_i(s_t,a^i_t,a^{-i}_t)$: SAM-NonUniform} \\ \hline
		CN & $\alpha_{1}exp(-\beta_{1}\sum_{j=1}^N dist(s_t^j,L_j))$    & $ -M_{1}\theta_{{a^i_tL_i}}+\alpha_{2}exp(-\beta_{2}\sum_{j=1}^N dist(s_t^j,L_j))$ \\ \hline
		PD & $ \alpha_{3}exp(-\beta_{2}\sum_{j=1}^N dist(s_t^j,L_j))$    & $-M_{2}\theta_{{a^i_tL_i}}+\alpha_{4}exp(-\beta_{4}\sum_{j=1}^N dist(s_t^j,L_j))$\\ \hline
		PP & $ \alpha_{5}exp(-\beta_{5}\sum_{j=1}^N dist(s_t^{pred_j},s_t^{prey}))$   & $-M_{3}\sum_{j=1}^N \theta_{{a^{pred_j}_t s_t^{prey}}}+\alpha_{6}exp(-\beta_{6}\sum_{j=1}^N dist(s_t^{pred_j},s_t^{prey}))$ \\ \hline
	\end{tabular}\caption{Shaping advice, $F^i_t$ provided by SAM is given by Equation (\ref{LAPBA}) or (\ref{LBPBA}). The table lists the potential functions used in the Cooperative Navigation (CN), Physical Deception (PD), and Predator-Prey (PP) tasks. $L_j$ is the landmark to which agent $j$ is \emph{anchored} to. $dist(\cdot,\cdot)$ denotes the Euclidean distance. $\theta_{{a^j_tL_j}} \in [0, \pi]$ is the angle between the direction of the action taken by agent $j$ and the vector directed from its current position to $L_j$. In \emph{SAM-Uniform}, advice for every action of the agents for a particular $s_t$ is the same. 
		In \emph{SAM-NonUniform}, agents are additionally penalized if their actions are not in the direction of their target. In each case, $F^i_t$ is positive when agents take actions that move it towards their target.
		}\label{TableSAM}
\end{table*}

In each environment, SAM provides shaping advice to guide agents to obtain a higher positive reward. 
This advice is augmented to the reward received from the environment. 
The advice is a heuristic given by a difference of potential functions (Equations (\ref{LAPBA}) or (\ref{LBPBA})), and only needs to be specified once at the start of the training process. 

In the \emph{Cooperative Navigation} and \emph{Physical Deception} tasks, we \emph{anchor} each agent to a (distinct) landmark. 
The shaping advice will then depend on the distance of an agent to the landmark it is anchored to. 
Although distances computed in this manner will depend on the order in which the agents and landmarks are chosen, we observe that it empirically works across multiple training episodes where positions of landmarks and initial positions of agents are generated randomly. 
The advice provided by SAM is positive when agents move closer to landmarks they are anchored to. 
In the absence of anchoring, they may get distracted and move towards different landmarks at different time steps.  
are reset. 
Anchoring results in agents learning to cover landmarks faster.

We consider two variants of advice for each task. 
In \textbf{\emph{SAM-Uniform}}, the advice for every action taken is the same. 
In \textbf{\emph{SAM-NonUniform}}, a higher weight is given to some `good' actions over others for each $s_t$. 
We enumerate the advice for each task in Table \ref{TableSAM}. 
We use MADDPG as the base RL algorithm \cite{lowe2017multi}. 
We compare the performance of agents trained with SAM (SAM-Uniform or SAM-NonUniform) to the performance of agents trained using the sparse reward from the environment. 
We also compare the performance of SAM with a state-of-the-art reward redistribution technique called Iterative Relative Credit Assignment (IRCR), introduced in \cite{gangwani2020learning}. 
\begin{figure*}
	\begin{subfigure}{0.31\textwidth}
		\includegraphics[width=\linewidth]{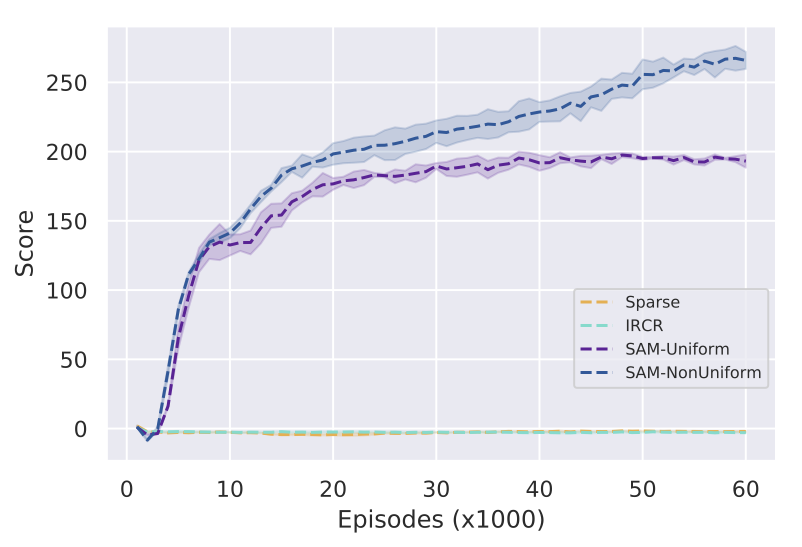}
		\caption{Cooperative Navigation ($N=6$)} \label{fig:1a}
	\end{subfigure}%
	\hspace*{\fill}   
	\begin{subfigure}{0.31\textwidth}
		\includegraphics[width=\linewidth]{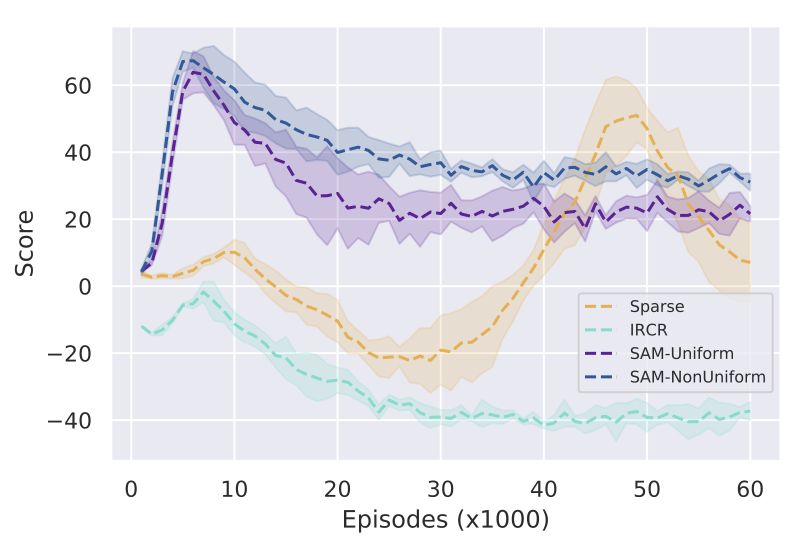}
		\caption{Physical Deception ($N=4$)} \label{fig:1b}
	\end{subfigure}%
	\hspace*{\fill}   
	\begin{subfigure}{0.31\textwidth}
		\includegraphics[width=\linewidth]{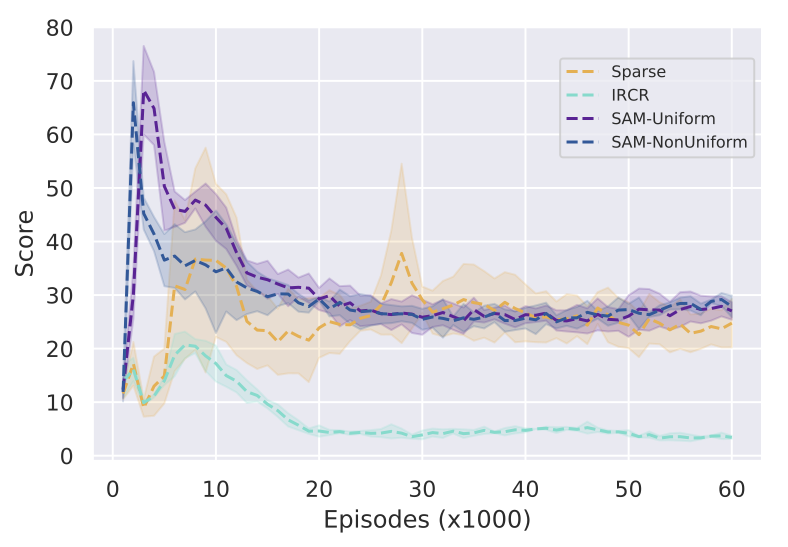}
		\caption{Predator-Prey ($3$ pred., $1$ prey.)} \label{fig:1c}
	\end{subfigure}
	
	\caption{Average and variance of scores when agents use SAM-NonUniform (blue), SAM-Uniform (purple), IRCR (green) and sparse rewards (orange). SAM-NonUniform results in the highest average scores. SAM-Uniform compares favorably, and both significantly outperform agents trained using only sparse rewards. 
	IRCR is not able to guide agents to obtain higher rewards in all three tasks.
	} \label{FigGraphs}
\end{figure*}
\subsection{Results}

Figure \ref{FigGraphs} shows the average and variance of the \textbf{score} during different stages of the training process.
The score for a task is the average agent reward in cooperative tasks, and the \emph{average agent advantage} ($=$ agent $-$ adversary reward) in competitive tasks \cite{wen2019probabilistic}. 

In terms of agent scores averaged over the last $1000$ training episodes,
agents equipped with \emph{SAM-NonUniform} have the best performance. 
This is because SAM-NonUniform provides specific feedback on the quality of agents' actions. 
\emph{SAM-Uniform} also performs well in these tasks. 

In cooperative navigation, when agents use only the sparse rewards from the environment, the agents are not able to learn policies that will allow them to even partially cover the landmarks. 
In comparison, SAM guides agents to learn to adapt to each others' policies, and cover all the landmarks. 
A similar phenomenon is observed in physical deception, where SAM guides agents to learn policies to cover the landmarks. 
This behavior of the agents is useful in deceiving the adversary from moving towards the true landmark, thereby resulting in lower final rewards for the adversary. 

We additionally compare the performance of SAM with a technique called IRCR that was introduced in \cite{gangwani2020learning}. 
We observe that agents using IRCR receive the lowest scores in all three tasks. 
We believe that a possible reason for this is that in each training episode, IRCR accumulates rewards till the end of the episode, and then uniformly redistributes the accumulated reward along the length of the episode. 
A consequence of this is that an agent may find it more difficult to identify the time-step when it reaches a landmark or when a collision occurs. 
For example, in the \emph{Predator-Prey} task, suppose that the length of an episode is $T_{ep}$. 
Consider a scenario where one of the predators collides with the prey at a time $T < T_{ep}$, and subsequently moves away from the prey. 
When IRCR is applied to this scenario, the redistributed reward at time $T$ will be the same as that at other time steps before $T_{ep}$. 
This property makes it difficult to identify critical time-steps when collisions between agents happen.  

The authors of \cite{lowe2017multi} observed that agent policies being unable to adapt to each other in competitive environments resulted in oscillations in rewards. 
Figure \ref{fig:1b} indicates that SAM is able to alleviate this problem. 
Policies learned by agents using SAM in the physical deception task 
result in much smaller oscillations in the rewards than when using sparse rewards alone. 

\section{Conclusion}\label{Conclusion}

This paper presented SAM, a framework to incorporate domain knowledge through shaping advice in cooperative and competitive multi-agent reinforcement learning (MARL) environments with sparse rewards. 
The shaping advice for each agent was a heuristic specified as a difference of potential functions, and was augmented to the reward provided by the environment. 
The modified reward signal provided agents with immediate feedback on the quality of the actions taken at each time-step. 
SAM used the centralized training with decentralized paradigm to efficiently learn decentralized policies for each agent that used only their individual local observations. 
We showed through theoretical analyses and experimental validation that shaping advice provided by SAM did not distract agents from accomplishing task objectives specified by the environment reward. 
We observed that SAM accelerated the learning of policies, and resulted in improved agent performance in three tasks with sparse rewards in the multi-agent Particle World environment. 
In competitive tasks, SAM alleviated a known problem of `oscillating rewards’ seen in prior work. 

Future work will extend SAM to cases where the shaping advice can be adaptively learned instead of being fixed for the duration of training. 
We will also analyze the sample-efficiency of learning when agents are equipped with SAM. 
This will broaden the application of SAM to more challenging real-world MARL environments. 

\bibliographystyle{IEEEtran}
\bibliography{RewShapBib}
\end{document}